\newtheorem{theorem}{Theorem}
\newtheorem{claim}{Claim}
\title{Understanding and Improving Transformer\\ From a Multi-Particle Dynamic System Point of View}
\author{
    Yiping Lu$^{1,}$\thanks{Equal contribution. Works done while interning at Microsoft Research Asia.} \quad Zhuohan Li$^{1,}$\footnotemark[1] \quad Di He$^{1}$ \quad Zhiqing Sun$^{1}$ \\ {\bf Bin Dong$^{1}$ \quad Tao Qin$^{2}$ \quad  Liwei Wang$^{1}$ \quad Tie-Yan Liu$^{2}$} \\
  $^1$Peking University\qquad$^2$Microsoft Research \\
  \texttt{\{luyiping9712,lizhouhan,di\_he,zhiqing\}@pku.edu.cn} \\ \texttt{dongbin@math.pku.edu.cn\quad wanglw@pku.edu.cn} \\
  \texttt{\{taoqin, tyliu\}@microsoft.com}
}
\begin{document}

\maketitle

\begin{abstract}
The Transformer architecture is widely used in natural language processing. Despite its success, the design principle of the Transformer remains elusive. In this paper, we provide a novel perspective towards understanding the architecture: we show that the Transformer can be mathematically interpreted as a \emph{numerical Ordinary Differential Equation (ODE) solver for a convection-diffusion equation in a multi-particle dynamic system}. In particular, how words in a sentence are abstracted into contexts by passing through the layers of the Transformer can be interpreted as approximating multiple particles' movement in the space using the Lie-Trotter splitting scheme and the Euler's method. Given this ODE's perspective, the rich literature of numerical analysis can be brought to guide us in designing effective structures beyond the Transformer. As an example, we propose to replace the Lie-Trotter splitting scheme by the Strang-Marchuk splitting scheme, a scheme that is more commonly used and with much lower local truncation errors. The Strang-Marchuk splitting scheme suggests that the self-attention and position-wise feed-forward network (FFN) sub-layers should not be treated equally. Instead, in each layer, two position-wise FFN sub-layers should be used, and the self-attention sub-layer is placed in between. This leads to a brand new architecture. Such an FFN-attention-FFN layer is ``Macaron-like", and thus we call the network with this new architecture the \emph{Macaron} Net. Through extensive experiments, we show that the Macaron Net is superior to the Transformer on both supervised and unsupervised learning tasks. The reproducible codes and pretrained models can be found at \url{https://github.com/zhuohan123/macaron-net}

\end{abstract}

\section{Introduction}

The Transformer is one of the most commonly used neural network architectures in natural language processing.  Variants of the Transformer have achieved state-of-the-art performance in many tasks including language modeling \cite{dai2019transformer,al2018character} and machine translation \cite{vaswani2017attention,dehghani2018universal,edunov2018understanding}. Transformer-based unsupervised pre-trained models also show impressive performance in many downstream tasks \cite{radford2019language,devlin2018bert,peters2018deep}. 

The Transformer architecture is mainly built by stacking layers, each of which consists of two sub-layers with residual connections: the self-attention sub-layer and the position-wise feed-forward network (FFN) sub-layer.  For a given sentence, the self-attention sub-layer considers the semantics and dependencies of words at different positions and uses that information to capture the internal structure and representations of the sentence. The position-wise FFN sub-layer is applied to each position separately and identically to encode context at each position into higher-level representations. Although the Transformer has demonstrated promising results in many tasks, its design principle is not fully understood, and thus the strength of the architecture is not fully exploited. As far as we know, there is little work studying the foundation of the Transformer or different design choices.

\begin{wrapfigure}[15]{r}{.49\textwidth}
    \centering
    \vspace{-12pt}
    \includegraphics[width=.48\textwidth]{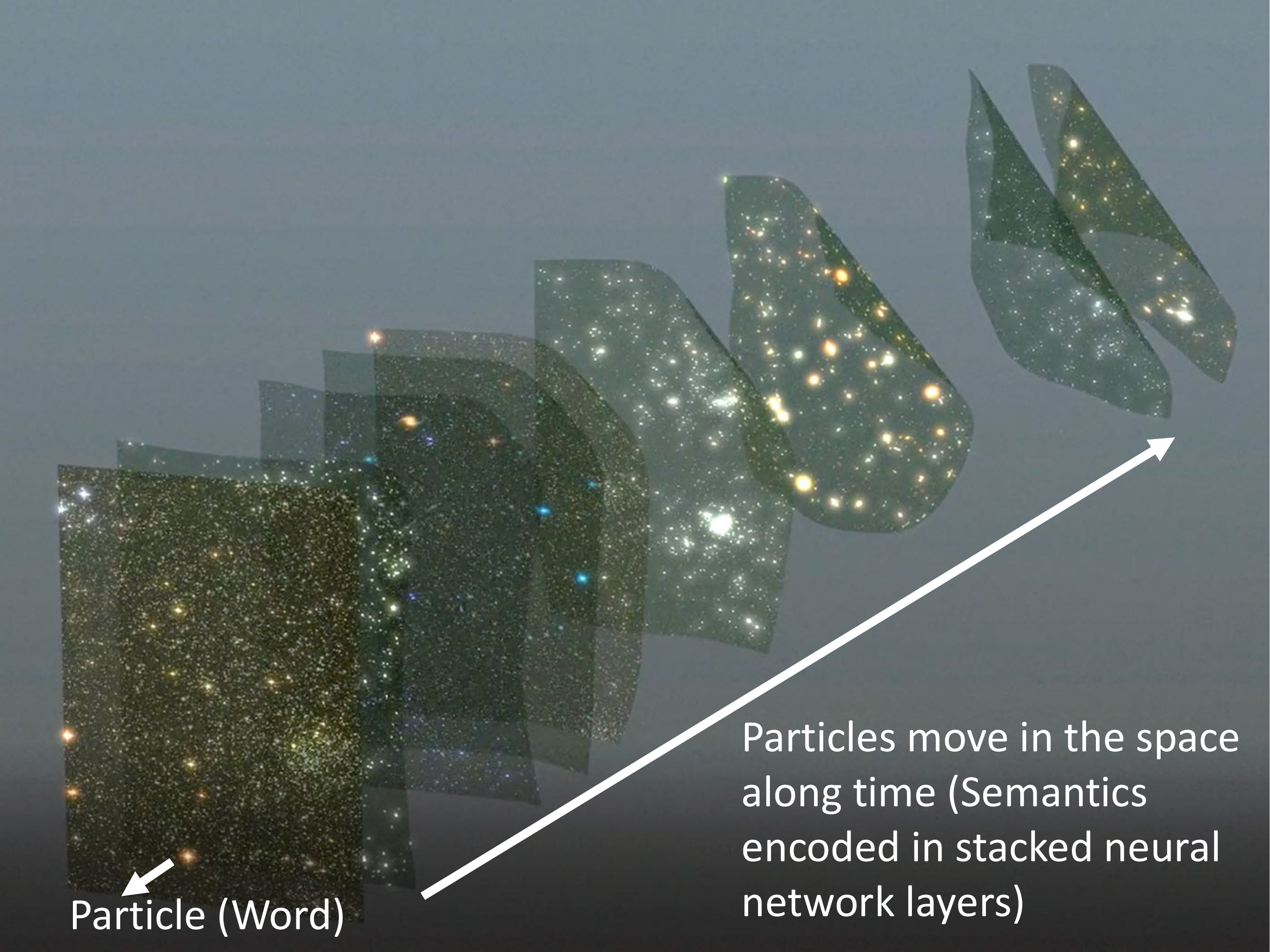}
    \vspace{2pt}
    \caption{Physical interpretation of Transformer.}
    \label{fig:art}
\end{wrapfigure}

In this paper, we provide a novel perspective towards understanding the architecture. In particular, we are the first to show that the Transformer architecture is inherently related to Multi-Particle Dynamic System (MPDS) in physics. MPDS is a well-established research field which aims at modeling how a collection of particles move in the space using differential equations \cite{moulton2012introduction}. In MPDS, the behavior of each particle is usually modeled by two factors separately. The first factor is the convection which concerns the mechanism of each particle regardless of other particles in the system, and the second factor is the diffusion which models the movement of the particle resulting from other particles in the system. 

Inspired by the relationship between the ODE and neural networks \cite{lu2017beyond, chen2018neural}, we first show that the Transformer layers can be naturally interpreted as a numerical ODE solver for a first-order convection-diffusion equation in MPDS. To be more specific, the self-attention sub-layer, which transforms the semantics at one position by attending over all other positions, corresponds to the diffusion term; The position-wise FFN sub-layer, which is applied to each position separately and identically, corresponds to the convection term. The number of stacked layers in the Transformer corresponds to the time dimension in ODE. In this way, the stack of self-attention sub-layers and position-wise FFN sub-layers with residual connections can be viewed as solving the ODE problem numerically using the Lie-Trotter splitting scheme \cite{geiser2009decomposition} and the Euler's method \cite{ascher1998computer}. By this interpretation, we have a novel understanding of learning contextual representations of a sentence using the Transformer: the feature (a.k.a, embedding) of words in a sequence can be considered as the initial positions of a collection of particles, and the latent representations abstracted in stacked Transformer layers can be viewed as the location of particles moving in a high-dimensional space at different time points.

Such an interpretation not only provides a new perspective on the Transformer but also inspires us to design new structures by leveraging the rich literature of numerical analysis. The Lie-Trotter splitting scheme is simple but not accurate and often leads to high approximation error \cite{geiser2009decomposition}. The Strang-Marchuk splitting scheme \cite{strang1968construction} is developed to reduce the approximation error by a simple modification to the Lie-Trotter splitting scheme and is theoretically more accurate. Mapped to neural network design, the Strang-Marchuk splitting scheme suggests that there should be three sub-layers: two position-wise feed-forward sub-layers with half-step residual connections and one self-attention sub-layer placed in between with a full-step residual connection. By doing so, the stacked layers will be more accurate from the ODE's perspective and will lead to better performance in deep learning. As the FFN-attention-FFN layer is ``Macaron-like", we call it \emph{Macaron layer} and call the network composed of Macaron layers the \emph{Macaron Net}. 

We conduct extensive experiments on both supervised and unsupervised learning tasks. For each task, we replace Transformer layers by Macaron layers and keep the number of parameters to be the same. Experiments show that the Macaron Net can achieve higher accuracy than the Transformer on all tasks which, in a way, is consistent with the ODE theory. 

\section{Background}
\subsection{Relationship Between Neural Networks and ODE}
\label{sec:neural-ode}
Recently, there are extensive studies to bridge deep neural networks with ordinary differential equations \cite{weinan2017proposal,lu2017beyond,haber2017stable,chen2018neural,zhang2018dynamically,sonoda2019transport,thorpe2018deep}. We here present a brief introduction to such a relationship and discuss how previous works borrow powerful tools from numerical analysis to help deep neural network design. 

A first-order ODE problem is usually defined as to solve the equation (i.e., calculate $x(t)$ for any $t$) which satisfies the following first-order derivative and the initial condition:
\begin{eqnarray}
\frac{\mathrm{d}x(t)}{\mathrm{d}t}=f(x,t),\quad x(t_0)=w, \label{equ:first-order-ode}
\end{eqnarray}
in which $x(t)\in \mathbb{R}^d$ for all $t\geq t_0$. ODEs usually have physical interpretations. For example, $x(t)$ can be considered as the location of a particle moving in the $d$-dimensional space and the first order time derivative can be considered as the velocity of the particle. 

Usually there is no analytic solution to Eqn (\ref{equ:first-order-ode}) and the problem has to be solved numerically. The simplest numerical ODE solver is the Euler's method \cite{ascher1998computer}. The Euler's method discretizes the time derivative $\frac{\mathrm{d}x(t)}{\mathrm{d}t}$ by its first-order approximation $\frac{x(t_2)-x(t_1)}{t_2-t_1}\approx f(x(t_1),t_1)$. By doing so, for the fixed time horizon $T =t_0+\gamma L$, we can estimate $x(T)$ from $x_0 \doteq x(t_0)$ by sequentially estimating $x_{l+1} \doteq x(t_{l+1})$  using \begin{eqnarray}
x_{l+1} =  x_l+\gamma f(x_l, t_l)
\end{eqnarray}
where $l=0,\cdots,L-1$, $t_l = t_0 + \gamma l$ is the time point corresponds to $x_l$, and $\gamma = (T - t_0)/L$ is the step size. As we can see, this is mathematically equivalent to the ResNet architecture \cite{lu2017beyond,chen2018neural}: The function $\gamma f(x_l,t_l)$ can be considered as a neural-network block, and the second argument $t_l$ in the function indicates the set of parameters in the $l$-th layer. The simple temporal discretization by Euler's method naturally leads to the residual connection.  

Observing such a strong relationship, researchers use ODE theory to explain and improve the neural network architectures mainly designed for computer vision tasks. \cite{lu2017beyond,chen2018neural} show any parametric ODE solver can be viewed as a deep residual network (probably with infinite layers), and the parameters in the ODE can be optimized through backpropagation. Recent works discover that new neural networks inspired by sophisticated numerical ODE solvers can lead to better performance. For example, \cite{zhu2018convolutional} uses a high-precision Runge-Kutta method to design a neural network, and the new architecture achieves higher accuracy. \cite{haber2017stable} uses a leap-frog method to construct a reversible neural network. \cite{liao2016bridging,chang2019antisymmetricrnn} try to understand recurrent neural networks from the ODE's perspective, and \cite{tao2018nonlocal} uses non-local differential equations to model non-local neural networks. 

\subsection{Transformer}
The Transformer architecture is usually developed by stacking Transformer layers \cite{vaswani2017attention, devlin2018bert}. A Transformer layer operates on a sequence of vectors and outputs a new sequence of the same shape. The computation inside a layer is decomposed into two steps: the vectors first pass through a (multi-head) self-attention sub-layer and the output will be further put into a position-wise feed-forward network sub-layer. Residual connection \cite{he2016identity} and layer normalization \cite{lei2016layer} are employed for both sub-layers. The visualization of a Transformer layer is shown in Figure \ref{fig:pipeline}(a) and the two sub-layers are defined as below.

\paragraph{Self-attention sub-layer}
The attention mechanism can be formulated as querying a dictionary with key-value pairs \citep{vaswani2017attention}, e.g., $\text{Attention}(Q,K,V)=\text{softmax}({QK^T}/{\sqrt{d_{\mathit{model}}}})\cdot V, $
where $d_\mathit{model}$ is the dimensionality of the hidden representations and $Q$ (Query), $K$ (Key), $V$ (Value) are specified as the hidden representations of the previous layer in the so-called \emph{self-attention} sub-layers in the Transformer architecture. The multi-head variant of attention allows the model to jointly attend to information from different representation subspaces, and is defined as
\begin{align}
\text{Multi-head}(Q,K,V) ={}& \text{Concat} (\text{head}_1,\cdots,\text{head}_H)W^O,\\
\text{head}_k ={}& \text{Attention}(QW_k^Q, KW_k^K,VW_k^V),
\end{align}
where $W_k^Q\in \mathbb{R}^{d_\mathit{model}\times d_K}, W_k^K\in \mathbb{R}^{d_\mathit{model}\times d_K}, W_k^V\in \mathbb{R}^{d_\mathit{model}\times d_V},$ and $W^O\in \mathbb{R}^{Hd_V \times d_\mathit{model}}$ are project parameter matrices, $H$ is the number of heads, and $d_K$ and $d_V$ are the dimensionalities of Key and Value. 

\paragraph{Position-wise FFN sub-layer} 
In addition to the self-attention sub-layer, each Transformer layer also contains a fully connected feed-forward network, which is applied to each position separately and identically. This feed-forward network consists of two linear transformations with an activation function $\sigma$ in between. Specially, given vectors $h_1, \ldots, h_n$, a position-wise FFN sub-layer transforms each $h_i$ as $\text{FFN}(h_i) = \sigma(h_iW_1+b_1) W_2 + b_2$, where $W_1, W_2, b_1$ and $b_2$ are parameters.

In this paper, we take the first attempt to provide an understanding of the feature extraction process in natural language processing from the ODE's viewpoint. As discussed in Section~\ref{sec:neural-ode}, several works interpret the standard ResNet using the ODE theory. However, we found this interpretation cannot be directly applied to the Transformer architecture. First, different from vision applications whose size of the input (e.g., an image) is usually predefined and fixed, the input (e.g., a sentence) in natural language processing is always of variable length, which makes the single-particle ODE formulation used in previous works not applicable. Second, the Transformer layer contains very distinct sub-layers. The self-attention sub-layer takes the information from all positions as input while the position-wise feed-forward layer is applied to each position separately. How to interpret these heterogeneous components by ODE is also not covered by previous works \cite{tao2018nonlocal,chen2018neural}.

\section{Reformulate Transformer Layers as an ODE Solver for Multi-Particle Dynamic System}
In this section, we first introduce the general form of differential equations in MPDS and then reformulate the stacked Transformer layers to show they form a numerical ODE solver for a specific problem. After that, we use advanced methods in the ODE theory to design new architectures.  

\subsection{Multi-Particle ODE and Its Numerical Solver}
\label{sec:mpode}
Understanding the dynamics of multiple particles' movements in space is one of the important problems in physics, especially in fluid mechanics and astrophysics \cite{moulton2012introduction}. The behavior of each particle is usually modeled by two factors: The first factor concerns about the mechanism of its movement regardless of other particles, e.g., caused by an external force outside of the system, which is usually referred to as the convection; The second factor concerns about the movement resulting from other particles, which is usually referred to as the diffusion. Mathematically, assume there are $n$ particles in $d$-dimensional space. Denote $x_i(t)\in \mathbb{R}^d$ as the location of $i$-th particle at time $t$. The dynamics of particle $i$ can be formulated as  
\begin{align}
\label{equ:trans}
\frac{\mathrm{d}x_i(t)}{\mathrm{d}t} ={}& F(x_i(t),[x_1(t),\cdots,x_n(t)],t) + G(x_i(t),t) ,\nonumber\\
x_i(t_0)={}& w_i, \quad i=1,\ldots,n.
\end{align}
Function $F(x_i(t),[x_1(t),\cdots,x_n(t)],t)$ represents the diffusion term which characterizes the interaction between the particles. $G(x,t)$ is a function which takes a location $x$ and time $t$ as input and represents the convection term. 

\paragraph{Splitting schemes}
As we can see, there are two coupled terms in the right-hand side of Eqn (\ref{equ:trans}) describing different physical phenomena. Numerical methods of directly solving such ODEs can be complicated. The splitting method is a prevailing way of solving such coupled differential equations that can be decomposed into a sum of differential operators \cite{mclachlan2002splitting}. Furthermore, splitting convection from diffusion is quite standard for many convection-diffusion equations \cite{glowinski2017splitting,geiser2009decomposition}. The 
Lie-Trotter splitting scheme \cite{geiser2009decomposition} is the simplest splitting method. It splits the right-hand side of Eqn (\ref{equ:trans}) into function $F(\cdot)$ and $G(\cdot)$ and solves the individual dynamics alternatively. More precisely, to compute $x_i(t+\gamma)$ from $x_i(t)$, the Lie-Trotter splitting scheme with the Euler's method reads as
\begin{align}
    \tilde x_i(t)={}&x_i(t)+\gamma F(x_i(t),[x_1(t),x_2(t),\cdots, x_n(t)],t),\label{equ:split1}\\
    x_i(t+\gamma)={}&\tilde x_i(t)+\gamma G(\tilde x_i(t),t).
    \label{equ:split2}
\end{align}
From time $t$ to time $t+\gamma$, the Lie-Trotter splitting method first solves the ODE with respect to $F(\cdot)$ and acquire an intermediate location $\tilde x_i(t)$. Then, starting from $\tilde x_i(t)$ , it solves the second ODE with respect to $G(\cdot)$ to obtain $x_i(t+\gamma)$.

\subsection{Physical interpretation of the Transformer}

We reformulate the two sub-layers of the Transformer in order to match its form with the ODE described above. Denote $x_l = (x_{l, 1},\ldots,x_{l, n})$ as the input to the $l$-th Transformer layer, where $n$ is the sequence length and $x_{l, i}$ is a real-valued vector in $\mathbb{R}^d$ for any $i$. 

\paragraph{Reformulation of the self-attention sub-layer} Denote $\tilde x_{l, i}$ as the output of the (multi-head) self-attention sub-layer at position $i$ with residual connections. The computation of $\tilde x_{l, i}$ can be written as
\begin{align}
\label{equ:mhselfatt}
\tilde x_{l, i} ={}& x_{l, i} + \text{Concat}\left(\text{head}_1,...,\text{head}_H\right)W^{O, l}, \\
\text{where head}_k ={}& \sum_{j=1}^n \alpha_{ij}^{(k)}[x_{l, j}W^{V,l}_k] = \sum_{j=1}^n \left(\frac{\exp(e_{ij}^{(k)})}{\sum_{q=1}^n\exp(e_{iq}^{(k)})}\right)[x_{l, j}W^{V,l}_k],
\end{align}
and $e_{ij}^{(k)}$ is computed as the dot product of input $x_{l, i}$ and $x_{l, j}$ with linear projection matrices $W^{Q,l}_k$ and $W^{K,l}_k$, i.e., $e_{ij}^{(k)}=d_{\mathit{model}}^{-1/2}\cdot (x_{l, i}W^{Q,l}_k)(x_{l, j}W^{K,l}_k)^T$. Considering $\alpha_{ij}^{(k)}$ as a normalized value of the pair-wise dot product $e_{ij}^{(k)}$ over $j$,  we can generally reformulate Eqn (\ref{equ:mhselfatt}) as
\begin{eqnarray}
\label{equ:selfatt2ode}
\tilde x_{l, i}=x_{l, i}+\text{MultiHeadAtt}_{W_{\mathit{att}}^{l}}(x_{l, i},[ x_{l, 1}, x_{l, 2},\cdots, x_{l, n}]), 
\end{eqnarray}
where $W_{att}^{l}$ denotes all trainable parameters in the $l$-th self-attention sub-layer.
\paragraph{Reformulation of the position-wise FFN sub-layer} Next, $\tilde x_{l, i}$ is put into the position-wise FFN sub-layer with residual connections and output $x_{l+1, i}$. The computation of $x_{l+1, i}$ can be written as
\begin{eqnarray}
\label{equ:ffn2ode}
x_{l+1, i}=\tilde x_{l, i}+\text{FFN}_{W_{\mathit{ffn}}^{l}}(\tilde x_{l, i}),
\end{eqnarray}
where $W_{\mathit{ffn}}^{l}$ denotes all trainable parameters in the $l$-th position-wise FFN sub-layer.
\paragraph{Reformulation of Transformer layers} Combining Eqn (\ref{equ:selfatt2ode}) and (\ref{equ:ffn2ode}), we reformulate the Transformer layers\footnote{Layer normalization is sometimes applied to the sub-layers but recent work \cite{zhang2019fixup} shows that the normalization trick is not essential and can be removed. One can still readily check that the reformulation (Eqn (\ref{equ:transformer2ode1}) and (\ref{equ:transformer2ode2})) still holds with layer normalization.} as
\begin{align}
\label{equ:transformer2ode1}
\tilde x_{l, i}={}&x_{l, i}+\text{MultiHeadAtt}_{W_{\mathit{att}}^{l}}(x_{l, i},[ x_{l, 1}, x_{l, 2},\cdots, x_{l, n}]),\\
\label{equ:transformer2ode2}
x_{l+1, i}={}&\tilde x_{l, i}+\text{FFN}_{W_{\mathit{ffn}}^{l}}(\tilde x_{l, i}).
\end{align}
We can see that the Transformer layers (Eqn  (\ref{equ:transformer2ode1}-\ref{equ:transformer2ode2}))  resemble the multi-particle ODE solver in Section~\ref{sec:mpode} (Eqn (\ref{equ:split1}-\ref{equ:split2})). Indeed, we can formally establish the link between the ODE solver with splitting scheme and stacked Transformer layers as below.
\begin{claim} Define $\gamma F^*(x_{l, i},[x_{l, 1}, \cdots, x_{l, n}], t_l)=\text{MultiHeadAtt}_{W_{\mathit{att}}^{l}}(x_{l, i},[ x_{l, 1}, \cdots, x_{l, n}])$ and $\gamma G^*(x_{l, i}, t_l) = \text{FFN}_{W_{\mathit{ffn}}^{l}}(x_{l, i})$. The Transformer can be viewed as a numerical ODE solver using Lie-Trotter splitting scheme and the Euler's method (with time step $\gamma$) for Eqn (\ref{equ:trans}) with $F^*$ and $G^*$.
\end{claim}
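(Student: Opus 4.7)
The plan is to verify the claim by direct substitution: plug the given definitions of $F^*$ and $G^*$ into the Lie--Trotter/Euler update equations (\ref{equ:split1})--(\ref{equ:split2}) and check that the resulting recursion coincides line-by-line with the reformulated Transformer recursion (\ref{equ:transformer2ode1})--(\ref{equ:transformer2ode2}). This is essentially a definitional check rather than a deep computation, but it requires being careful about (i) the identification between the continuous time index $t$ and the discrete layer index $l$, and (ii) the fact that the step size $\gamma$ has already been absorbed into the definitions of $F^*$ and $G^*$.

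First I would set up the discretization. Fix a time horizon $T = t_0 + \gamma L$ with $L$ equal to the number of Transformer layers, let $t_l = t_0 + \gamma l$ for $l = 0, 1, \ldots, L-1$, and identify $x_{l, i}$ with the state $x_i(t_l)$ of the $i$-th particle at time $t_l$. Under this identification, the input word-embedding sequence $(x_{0,1}, \ldots, x_{0,n})$ plays the role of the initial condition $(w_1, \ldots, w_n)$ in (\ref{equ:trans}), and the output of the final layer plays the role of $x(T)$.

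Next I would apply the Lie--Trotter scheme with Euler's method (Eqn (\ref{equ:split1})--(\ref{equ:split2})) to the MPDS (\ref{equ:trans}). With the identifications above, (\ref{equ:split1}) becomes $\tilde x_{l, i} = x_{l, i} + \gamma F^*(x_{l, i}, [x_{l, 1}, \ldots, x_{l, n}], t_l)$, and inserting the definition $\gamma F^*(\cdots) = \text{MultiHeadAtt}_{W_{\mathit{att}}^{l}}(x_{l, i}, [x_{l, 1}, \ldots, x_{l, n}])$ reproduces (\ref{equ:transformer2ode1}) verbatim. Similarly, (\ref{equ:split2}) becomes $x_{l+1, i} = \tilde x_{l, i} + \gamma G^*(\tilde x_{l, i}, t_l)$, which, by the definition $\gamma G^*(x, t_l) = \text{FFN}_{W_{\mathit{ffn}}^{l}}(x)$, reproduces (\ref{equ:transformer2ode2}) verbatim. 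The $t_l$-dependence of $F^*$ and $G^*$ is realized precisely through the layer-indexed parameter tensors $W_{\mathit{att}}^{l}$ and $W_{\mathit{ffn}}^{l}$, which is the standard way a non-autonomous velocity field is discretized when its time dependence is piecewise constant on the grid $\{t_l\}$.

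There is no real obstacle here; the statement is a consistency identification, not a convergence or error bound. The only point worth emphasizing at the end of the argument is what the claim does \emph{not} say: it does not assert that $F^*$ and $G^*$ approximate any fixed continuous vector field as $\gamma \to 0$, nor does it control the local truncation error of the scheme. Those quantitative questions are what later motivate replacing Lie--Trotter by the Strang--Marchuk splitting, but they are logically separate from the present claim, which is established by the two-line substitution above.
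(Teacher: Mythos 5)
Your argument is correct and is exactly the identification the paper makes: the paper simply observes that the reformulated Transformer recursion (Eqn (\ref{equ:transformer2ode1})--(\ref{equ:transformer2ode2})) coincides term-by-term with the Lie--Trotter/Euler update (Eqn (\ref{equ:split1})--(\ref{equ:split2})) once one sets $x_{l,i} = x_i(t_l)$ and absorbs $\gamma$ into the definitions of $F^*$ and $G^*$, which is precisely your two-line substitution. Your added remarks---that the $t_l$-dependence is carried by the layer-indexed weights $W^l_{\mathit{att}}$, $W^l_{\mathit{ffn}}$ (mirroring the paper's earlier comment that ``the second argument $t_l$ in the function indicates the set of parameters in the $l$-th layer''), and that the claim asserts no $\gamma\to 0$ limit or truncation-error bound---are accurate and consistent with how the paper separates this structural identification from the later Strang--Marchuk accuracy discussion.
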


The above observation grants a physical interpretation of natural language processing and provides a new perspective on the Transformer architecture. First, this perspective provides a unified view of the heterogeneous components in the Transformer. The self-attention sub-layer is viewed as a diffusion term which characterizes the particle interactions while the position-wise feed-forward networks is viewed as a convection term. The two terms together naturally form the convection-diffusion equation in physics. Second, this interpretation advances our understanding of the latent representations of language through the Transformer. Viewing the feature (a.k.a., embedding) of words in a sequence as the initial position of particles, we can interpret the latent representations of the sentence abstracted by the Transformer as particles moving in a high-dimensional space as demonstrated in Figure~\ref{fig:art} \cite{art}. 
 
\begin{figure}
    \centering
    \includegraphics[width=.75\textwidth]{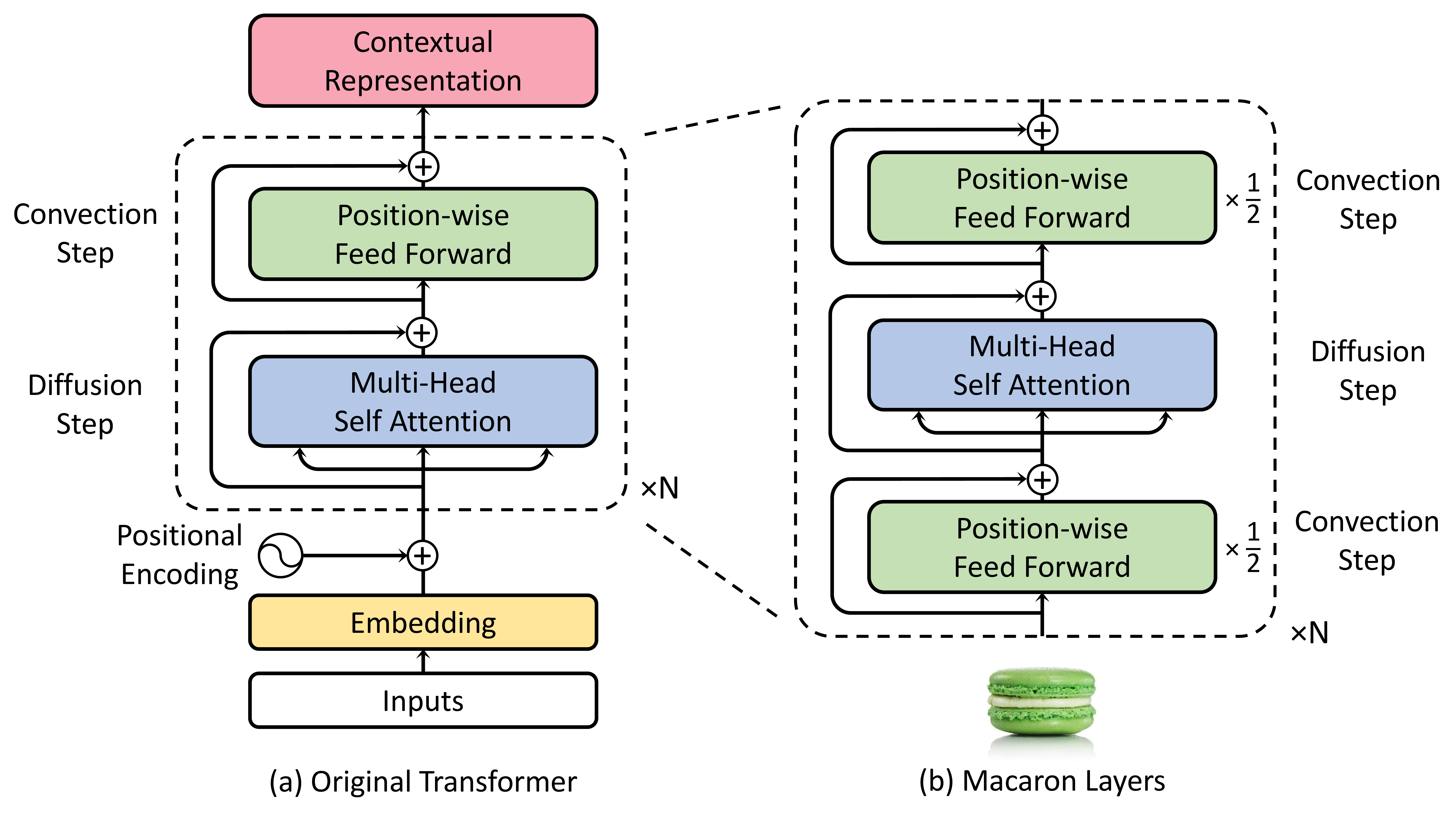}
\caption{The Transformer and our Macaron architectures.}    \label{fig:pipeline}
\end{figure}
 
\subsection{Improving Transformer Via Strang-Marchuk Splitting Scheme}
In the previous subsection, we have successfully mapped the Transformer architecture to a numerical ODE solver for MPDS. However, we would like to point out that one of the key components in this ODE solver, the Lie-Trotter splitting scheme, is the simplest one but has relatively high errors. In this subsection, we incorporate one of the most popular and widely used splitting scheme \cite{geiser2009decomposition}, the Strang-Marchuk splitting scheme, into the design of the neural networks.

The Lie-Trotter splitting scheme solves the dynamics of $F(\cdot)$ and $G(\cdot)$ alternatively and exclusively in that order. This inevitably brings bias and leads to higher local truncation errors \cite{geiser2009decomposition}. To mitigate the bias,  we use a simple modification to the Lie-Trotter splitting scheme by dividing the one-step numerical solver for $G(\cdot)$ into two \emph{half-steps}: we put one half-step before solving $F(\cdot)$ and put the other half-step after solving $F(\cdot)$. This modified splitting scheme is known as the Strang-Marchuk splitting scheme \cite{strang1968construction}. Mathematically, to compute $x_i(t+\gamma)$ from $x_i(t)$, the Strang-Marchuk splitting scheme reads as
\begin{align}
        \tilde x_i(t)={}&x_i(t)+\frac{\gamma}{2} G(x_i(t),t),\label{Equ:111split0}\\
\hat x_i(t)={}&\tilde x_i(t)+\gamma F(\tilde x_i(t),[\tilde x_1(t),\tilde x_2(t),\cdots,\tilde x_n(t)],t),\label{Equ:111split1}\\
    x_i(t+\gamma)={}&\hat x_i(t)+\frac{\gamma}{2} G\left(\hat x_i(t),t+\frac{\gamma}{2}\right).
    \label{Equ:111split2}
\end{align}

The Strang-Marchuk splitting scheme enjoys higher-order accuracy than the Lie-Trotter splitting scheme \cite{bobylev2001error} in terms of the \emph{local truncation error} \cite{ascher1998computer}, which measures the per-step distance between the true solution and the approximated solution using numerical schemes. Mathematically, for a differential equation $\frac{\mathrm{d}x(t)}{\mathrm{d}t}=f(x,t)$ and a numerical scheme $\mathcal{A}$, the local truncation error of numerical scheme $\mathcal{A}$ is defined as $\tau = x(t+\gamma)-\mathcal{A}(x(t),\gamma)$. For example, when $\mathcal{A}$ is the Euler's method, $\tau_{Euler} = x(t+\gamma)-x(t)-\gamma f(x(t),t)$. The order of local truncation error of the two schemes has been studied in \cite{bobylev2001error}, as shown in the following theorem. 
\begin{theorem} \cite{bobylev2001error}
The local truncation error of the Lie-Trotter splitting scheme is \textbf{second-order} ($O(\gamma^2)$) and the local truncation error of the Strang-Marchuk splitting scheme is \textbf{third-order} ($O(\gamma^3)$). 
\end{theorem}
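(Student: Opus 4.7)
The plan is to expand both the exact solution $x(t+\gamma)$ and each numerical update in Taylor series around $\gamma = 0$ and match coefficients term by term. For clean exposition I would first treat the autonomous scalar case $\dot x = F(x) + G(x)$; the non-autonomous multi-particle setting of Eqn~(\ref{equ:trans}) reduces to this by augmenting the state with $t$ and by regarding $F$ as a single vector field on the product configuration space $\mathbb{R}^{nd}$. Let $\Phi_\gamma$, $\Phi^F_\gamma$, $\Phi^G_\gamma$ denote the exact flows of the full equation and of the two sub-equations, and note the elementary fact that for any smooth $H$,
\begin{equation}
\Phi^H_\gamma(x) = x + \gamma H(x) + \tfrac{\gamma^2}{2}\,\partial H(x)\,H(x) + O(\gamma^3).
\end{equation}
Applied to $H = F+G$ this gives the reference expansion against which both schemes will be compared.

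For the Lie-Trotter composition $\Psi^{LT}_\gamma(x) = \Phi^G_\gamma(\Phi^F_\gamma(x))$, I would substitute the inner expansion into the outer, expand $G$ around $x$, and collect powers of $\gamma$. The $\gamma^0$ and $\gamma^1$ terms match the exact flow automatically, while at order $\gamma^2$ one finds
\begin{equation}
\Phi_\gamma(x)-\Psi^{LT}_\gamma(x) \;=\; \tfrac{\gamma^2}{2}\bigl(\partial F \cdot G - \partial G \cdot F\bigr) + O(\gamma^3) \;=\; -\tfrac{\gamma^2}{2}[F,G](x) + O(\gamma^3),
\end{equation}
where $[F,G]$ is the Lie bracket. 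Since this is generically nonzero, the local truncation error is sharp at order $\gamma^2$, giving the first half of the theorem.

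For Strang-Marchuk $\Psi^{SM}_\gamma = \Phi^G_{\gamma/2}\circ\Phi^F_\gamma\circ\Phi^G_{\gamma/2}$, the cleanest route is a symmetry argument: the composition satisfies $\Psi^{SM}_{-\gamma}\circ \Psi^{SM}_\gamma = \mathrm{id}$ as a formal power series in $\gamma$, because reversing time reverses the order of the factors and the middle $\Phi^F_\gamma$ inverts to $\Phi^F_{-\gamma}$. A standard result in geometric integration then forces the error $\Phi_\gamma - \Psi^{SM}_\gamma$ to contain only odd powers of $\gamma$; combined with the easy check that the $\gamma^1$ coefficient is $F+G$, this upgrades the accuracy from $O(\gamma^2)$ to $O(\gamma^3)$ for free. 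Alternatively, a direct Taylor expansion works: the half-step $\Phi^G_{\gamma/2}$ before $\Phi^F_\gamma$ contributes $+\tfrac{\gamma^2}{4}[F,G]$ to the bracket term, and the half-step after contributes another $+\tfrac{\gamma^2}{4}[F,G]$, which together exactly cancel the $-\tfrac{\gamma^2}{2}[F,G]$ deficit that Lie-Trotter suffered from.

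The main obstacle is the bookkeeping in the direct expansion: three compositions each contribute $\partial F\cdot F$, $\partial G\cdot G$, and the various cross terms $\partial F\cdot G$, $\partial G\cdot F$, $\partial G \cdot \partial G \cdot F$, etc., all with coefficients depending on the fractional step $\gamma/2$, and one must verify that precisely the second-order piece cancels while leaving an irreducible $O(\gamma^3)$ remainder determined by higher nested brackets such as $[F,[F,G]]$ and $[G,[G,F]]$. This motivates preferring the symmetry argument, which reduces the combinatorial work to a one-line observation. Finally, to turn the formal $O(\gamma^3)$ statement into a rigorous bound one invokes the smoothness hypothesis on $F$ and $G$ (say $C^3$), whence the remainder terms are uniformly bounded on compact sets by Taylor's theorem with integral remainder; the same argument carries over verbatim to the non-autonomous multi-particle case after the state-augmentation step.
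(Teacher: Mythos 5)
Your Lie--Trotter estimate follows the same Taylor-expansion route as the paper (you phrase the leading error as $-\tfrac{\gamma^2}{2}[F,G]$, the paper writes it out as $\tfrac{\gamma^2}{2}\{F'G - G'F\}$; same object). For the Strang--Marchuk half, however, your primary route is genuinely different. You invoke the time-symmetry of the composition: $\Psi^{SM}_\gamma = \Phi^G_{\gamma/2}\circ\Phi^F_\gamma\circ\Phi^G_{\gamma/2}$ is its own adjoint, $(\Psi^{SM}_\gamma)^{-1} = \Psi^{SM}_{-\gamma}$, so the formal error series has only odd powers of $\gamma$; consistency at order one then forces the local error to be $O(\gamma^3)$. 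The paper instead stays elementary: it splits the middle factor as $\Phi^F_\gamma = \Phi^F_{\gamma/2}\circ\Phi^F_{\gamma/2}$, regroups the composition as $(\Phi^G_{\gamma/2}\circ\Phi^F_{\gamma/2})\circ(\Phi^F_{\gamma/2}\circ\Phi^G_{\gamma/2})$, applies the already-derived Lie--Trotter error formula (with step $\gamma/2$) to each pair, and observes that the two $\tfrac{\gamma^2}{8}$ errors have opposite signs and cancel. Your symmetry argument is shorter and more conceptual, but relies on a standard theorem from geometric integration that a reader must accept or prove separately; the paper's regrouping is self-contained, reuses the first calculation verbatim, and needs no black box. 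Note also that your sketched ``alternative'' direct expansion (half-step before contributes $+\tfrac{\gamma^2}{4}[F,G]$, half-step after another $+\tfrac{\gamma^2}{4}[F,G]$) is loose as stated; the clean way to organize that bookkeeping is exactly the paper's regrouping trick, which you did not identify. Your reduction of the non-autonomous, multi-particle setting to an autonomous system on $\mathbb{R}^{nd+1}$ via state augmentation matches the footnote in the paper's appendix.
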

For completeness, we provide the formal theorem with proof in Appendix~\ref{sec:proof}. We can see from Eqn (\ref{Equ:111split0}-\ref{Equ:111split2}) that the Strang-Marchuk splitting scheme uses a three-step process to solve the ODE. Mapped to neural network design, the Strang-Marchuk splitting scheme (together with the Euler's method) suggests there should also be three sub-layers instead of the two sub-layers in Transformer. By replacing function $\gamma F$ and $\gamma G$ by MultiHeadAtt and FFN, we have
\begin{align}
\tilde x_{l, i}={}& x_{l, i} + \frac{1}{2}\text{FFN}_{W_{\mathit{ffn}}^{l, \mathit{down}}}(x_{l, i}),\label{equ:oreo2ode0}\\
\hat x_{l, i}={}&\tilde x_{l, i}+\text{MultiHeadAtt}_{W_{\mathit{att}}^{l}}(\tilde x_{l, i},[\tilde x_{l, 1},\tilde x_{l, 2},\cdots,\tilde x_{l, n}]),\label{equ:oreo2ode1}\\
x_{l+1, i}={}&\hat x_{l, i}+\frac{1}{2}\text{FFN}_{W_{\textit{ffn}}^{l,\textit{up}}}(\hat x_{l, i}).\label{equ:oreo2ode2}
\end{align}
From Eqn (\ref{equ:oreo2ode0}-\ref{equ:oreo2ode2}), we can see that the new layer composes of three sub-layers. Each hidden vector at different positions will first pass through the first position-wise FFN sub-layer with a half-step residual connection (``$\frac{1}{2}$'' in Eqn (\ref{equ:oreo2ode0})), and then the output vectors will be feed into a self-attention sub-layer. In the last step, the vectors outputted from the self-attention sub-layer will be put into the second position-wise FFN sub-layer with a half-step residual connection. Since the FFN-attention-FFN structure is ``Macaron''-like, we call the layer as \emph{Macaron} layer and call the network using Macaron layers as Macaron Net, as shown in Figure~\ref{fig:pipeline}(b). Previous works \cite{lu2017beyond,zhu2018convolutional} have successfully demonstrated that the neural network architectures inspired from higher-order accurate numerical ODE solvers will lead to better results in deep learning and we believe the Macaron Net can achieve better performance on practical natural language processing applications than the Transformer.

\begin{table}
\centering
\caption{Translation performance (BLEU) on IWSLT14 De-En and WMT14 En-De  testsets.}
\vspace{2pt}
\label{tbl:translation}
\centerline{
\begin{tabular}{lccc}
\toprule
& \textbf{IWSLT14 De-En} & \multicolumn{2}{c}{\textbf{WMT14 En-De}}      \\ 
\textbf{Method} & \texttt{small} & \texttt{base} & \texttt{big}  \\
\midrule
Transformer \cite{vaswani2017attention}  & 34.4 & 27.3 & 28.4 \\
Weighted Transformer \cite{ahmed2017weighted} & / & 28.4 & 28.9 \\
Relative Transformer \cite{shaw2018self} & / & 26.8 & 29.2 \\
Universal Transformer \cite{dehghani2018universal} & / & \textbf{28.9} & / \\
Scaling NMT \cite{ott2018scaling} & / & / & 29.3 \\
Dynamic Conv \cite{wu2019pay} & 35.2 & / & 29.7 \\
\midrule
\textbf{Macaron Net} & \textbf{35.4} & \textbf{28.9} & \textbf{30.2} \\
\bottomrule
\end{tabular}   
}
\end{table}

\section{Experiments}
We test our proposed Macaron architectures in both supervised and unsupervised learning setting. For supervised learning setting, we use IWLST14 and WMT14 machine translation datasets. For unsupervised learning setting, we pretrain the model using the same method as in \cite{devlin2018bert} and test the learned model over a set of downstream tasks. Due to space limitations, we put dataset description, model description, hyperparameter configuration into Appendix~\ref{sec:extra-exp}.
\subsection{Experiment Settings}
\paragraph{Machine Translation}
Machine translation is an important application for natural language processing \cite{vaswani2017attention}. We evaluate our methods on two widely used public datasets: IWSLT14 German-to-English (De-En) and WMT14 English-to-German (En-De) dataset. 

For the WMT14 dataset, the basic configurations of the Transformer architecture are the \texttt{base} and the \texttt{big} settings \cite{vaswani2017attention}. Both of them consist of a 6-layer encoder and 6-layer decoder. The size of the hidden nodes and embeddings are set to 512 for \texttt{base} and 1024 for \texttt{big}. The number of heads are 8 for \texttt{base} and 16 for \texttt{big}. Since the IWSLT14 dataset is much smaller than the WMT14 dataset, the \texttt{small} setting is usually used, whose size of hidden states and embeddings is set to 512 and the number of heads is set to 4. For all settings, the dimensionality of the inner-layer of the position-wise FFN is four times of the dimensionality of the hidden states. 

For each setting (\texttt{base}, \texttt{big} and \texttt{small}), we replace all Transformer layers by the Macaron layers\footnote{The translation model is based on the encoder-decoder framework. In the Transformer, the decoder layer has a third sub-layer which performs multi-head attention over the output of the encoder stack (encoder-decoder-attention) and a mask to prevent positions from attending to subsequent positions. In our implementation of Macaron decoder, we also use masks and split the FFN into two sub-layers and thus our decoder layer is (FFN, self-attention, encoder-decoder-attention, and FFN).} and obtain the \texttt{base}, \texttt{big} and \texttt{small} Macaron, each of which contains two position-wise feed-forward sub-layers in a layer. To make a fair comparison, we set the dimensionality of the inner-layer of the two FFN sub-layers in the Macaron layers to two times of the dimensionality of the hidden states. By doing this, the \texttt{base}, \texttt{big} and \texttt{small} Macaron have the same number of parameters as the \texttt{base}, \texttt{big} and \texttt{small} Transformer respectively. 

\paragraph{Unsupervised Pretraining}
BERT \cite{devlin2018bert} is the current state-of-the-art pre-trained contextual representation model based on a multi-layer Transformer encoder architecture and trained by masked language modeling and next-sentence prediction tasks. We compare our proposed Macaron Net with the \texttt{base} setting from the original BERT paper \cite{devlin2018bert}, which consists of 12 Transformer layers. The size of hidden states and embeddings are set to 768, and the number of attention heads is set to 12. Similarly, we replace the Transformer layers in BERT \texttt{base} by the Macaron layers and reduce the dimensionality of the inner-layer of the two FFN sub-layers by half, and thus we keep the number of parameters of our Macaron \texttt{base} same as BERT \texttt{base}. 

\begin{table}[t]
    \centering
    \caption{Test results on the GLUE benchmark (except WNLI).}\vspace{2pt}
\centerline{
\scalebox{0.83}{
\begin{tabular}{lccccccccc}
\toprule
\textbf{Method}    & CoLA          & SST-2         & MRPC      & STS-B     & QQP       & MNLI\footnotesize{-m/mm} & QNLI          & RTE           & \textbf{GLUE} \\ 
\midrule
\multicolumn{2}{l}{\emph{Existing systems}} \\
\midrule
ELMo \cite{peters2018deep}      & 33.6          & 90.4          & 84.4/78.0 & 74.2/72.3	 & 63.1/84.3	  & 74.1/74.5 & 79.8          & 58.9          & 70.0          \\
OpenAI GPT \cite{radford2018improving} & 47.2          & 93.1          & 87.7/83.7 & 85.3/84.8 & 70.1/88.1 & 80.7/80.6 & 87.2          & 69.1 & 76.9          \\
BERT \texttt{base} \cite{devlin2018bert}   & 52.1          & 93.5          & \textbf{88.9}/\textbf{84.8} & 87.1/85.8 & \textbf{71.2}/\textbf{89.2} & 84.6/83.4 & 90.5 & 66.4          & 78.3 \\
\midrule 
\multicolumn{2}{l}{\emph{Our systems}} \\
\midrule
BERT \texttt{base} (ours)   & 52.8          & 92.8          & 87.3/83.0 & 81.2/80.0 & 70.2/88.4 & 84.4/83.7 & 90.4 & 64.9          & 77.4          \\
Macaron Net \texttt{base}   & \textbf{57.6} & \textbf{94.0} & 88.4/84.4 & \textbf{87.5}/\textbf{86.3} & 70.8/89.0 & \textbf{85.4}/\textbf{84.5} & \textbf{91.6}          & \textbf{70.5}          & \textbf{79.7} \\ 
\bottomrule
\end{tabular}
}
}
    \label{tbl:bert}
\end{table}

\subsection{Experiment Results}

\paragraph{Machine Translation}
We use BLEU \cite{papineni2002bleu} as the evaluation measure for machine translation. Following common practice, we use tokenized case-sensitive BLEU and case-insensitive BLEU for WMT14 En-De and IWSLT14 De-En respectively.

The results for machine translation are shown in Table~\ref{tbl:translation}. For the IWSLT14 dataset, our Macaron \texttt{small} outperforms the Transformer \texttt{small} by 1.0 point in terms of BLEU. For the WMT14 dataset, our Macaron \texttt{base} outperforms its Transformer counterpart by 1.6 BLEU points. Furthermore, the performance of our Macaron \texttt{base} model is even better than that of the Transformer \texttt{big} model reported in \cite{vaswani2017attention}, but with much less number of parameters. Our Macaron \texttt{big} outperforms the Transformer \texttt{big} by 1.8 BLEU points. Comparing with other concurrent works, the improvements in our proposed method are still significant.

\paragraph{Unsupervised Pretraining} Following \cite{devlin2018bert}, we evaluate all models by fine-tuning them on 8 downstream tasks in the General Language Understanding Evaluation (GLUE) benchmark \cite{wang2019glue}, including CoLA \cite{warstadt2018neural}, SST-2 \cite{socher2013recursive}, MRPC \cite{dolan2005automatically}, STS-B \cite{cer2017semeval}, QQP \cite{chen2018quora}, MNLI \cite{williams2018broad}, QNLI \cite{rajpurkar2016squad}, and RTE \cite{bentivogli2009fifth}. More details about individual tasks and their evaluation metrics can be found in Appendix~\ref{sec:glue} and \cite{wang2019glue, devlin2018bert}.
To fine-tune the models, we follow the hyperparameter search space in \cite{devlin2018bert} for all downstream tasks, including different batch sizes, learning rates, and numbers of epochs. 

The GLUE results are presented in Table~\ref{tbl:bert}. We present the results of two BERT \texttt{base} models. One is from \cite{devlin2018bert}, and the other is trained using our own data. Due to that some pieces of data used in \cite{devlin2018bert} are no longer freely distributed, the two numbers are slightly different. We can see from the table, our proposed Macaron Net \texttt{base} outperforms all baselines in terms of the general GLUE score. Specifically, given the same dataset, our proposed model outperforms our trained BERT \texttt{base} model in all tasks. Even comparing with the BERT \texttt{base} model in \cite{devlin2018bert}, our model performs better in 6 out of 8 tasks and achieves close performance in the rest 2 tasks. Comparing  our results with original BERT model, we can see that given a fixed number of parameters and a fixed number of attention sub-layers, stacking more FFN sub-layers can get better performance on downstream tasks. 

As a summary, the improvement in both machine translation and GLUE tasks well aligns with the ODE theory and our proposed architecture performs better than the Transformer in real practice.

\section{Conclusion and Future Work}

In this paper, we interpret the Transformer as a numerical ODE solver for a convection-diffusion equation in a multi-particle dynamic system. Specifically, how words in a sentence are abstracted into contexts by passing through the layers of the Transformer can be interpreted as approximating multiple particles' movement in the space using the Lie-Trotter splitting scheme and the Euler's method. By replacing the Lie-Trotter splitting scheme with the more advanced Strang-Marchuk splitting scheme, we obtain a new architecture. The improvements in real applications show the effectiveness of the new model and are consistent with the ODE theory. In the future, we will explore deeper connections between the ODE theory and the Transformer models and use the ODE theory to improve the individual components in the Transformer architecture such as attention modules.

\section*{Acknowledgements}
We would like to thank Shuonan Wu for helpful discussions and Linyuan Gong for the help on unsupervised pretraining experiments.

\bibliographystyle{abbrv}
\bibliography{arxiv}

\newpage
\noindent\LARGE{\textbf{Appendix}}\normalsize
\appendix
\section{Proof of the Theorem}
\label{sec:proof}
\begin{theorem}(\cite{bobylev2001error})
We denote the true solution of equation $\frac{\mathrm{d}x}{\mathrm{d}t}=F(x),x(0)=y_0$\footnote{Since a time-dependent ODE can be formulated as a time-independent ODE by introducing an auxiliary variable \cite{Chicone2007Ordinary}, the theorem here developed for time-independent ODEs can also be applied to time-dependent ODEs without loss of generality.} at time $t$ as $x(t)=S^{t}_F(y_0)$. Simliarily, we can define $S^t_G$ and $S^t_{F+G}$. The local truncation error at $t=0$ of the Lie-Trotter splitting scheme is $S_{F+G}^{\gamma}(y_0)-S_G^{\gamma}[S_F^{\gamma}(y_0)]=\frac{\gamma^2}{2} \{F'(y_0)(y_0)G(y_0)-G'(y_0)(y_0)F(y_0)\}+O(\gamma^3)$ which is \textbf{second-order}. The local truncation error at $t=0$ of the Strang-Marchuk splitting scheme $S_G^{\gamma/2}\{S_F^{\gamma}[S_G^{\gamma/2}(y_0)]\}$ is $S_{F+G}^{\gamma}(y_0)-S_G^{\gamma/2}\{S_F^{\gamma}[S_G^{\gamma/2}(y_0)]\}=O(\gamma^3)$
which is \textbf{third-order}.
\end{theorem}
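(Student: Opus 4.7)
The plan is to attack both bounds by direct Taylor expansion of the flow maps $S_F^t$, $S_G^t$, and $S_{F+G}^t$ about $t=0$, then compare term by term. Recall that by the defining ODE $\frac{d}{dt} S_F^t(y) = F(S_F^t(y))$, one gets
\begin{equation*}
S_F^\gamma(y_0) = y_0 + \gamma F(y_0) + \tfrac{\gamma^2}{2} F'(y_0)F(y_0) + \tfrac{\gamma^3}{6}\bigl[F''(y_0)(F,F) + F'(y_0)F'(y_0)F\bigr](y_0) + O(\gamma^4),
\end{equation*}
and analogous expansions for $S_G^\gamma$ and for $S_{F+G}^\gamma$ (the latter obtained by substituting $F+G$ for $F$ and expanding the resulting product of derivatives). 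These three expansions are the only machinery I need.

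For the Lie--Trotter bound, I would first expand $z := S_F^\gamma(y_0)$ to order $\gamma^2$, then write $S_G^\gamma(z) = z + \gamma G(z) + \tfrac{\gamma^2}{2}G'(z)G(z) + O(\gamma^3)$, and finally expand $G(z) = G(y_0) + \gamma G'(y_0)F(y_0) + O(\gamma^2)$. Collecting terms, the $O(\gamma)$ contributions reproduce $\gamma(F+G)(y_0)$, matching $S_{F+G}^\gamma$. At order $\gamma^2$ the scheme yields $\tfrac12 F'F + G'F + \tfrac12 G'G$ while the true flow gives $\tfrac12(F'+G')(F+G) = \tfrac12(F'F + F'G + G'F + G'G)$. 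Subtracting produces exactly $\tfrac{\gamma^2}{2}\bigl(F'(y_0)G(y_0) - G'(y_0)F(y_0)\bigr) + O(\gamma^3)$, which is the stated second-order truncation error (and vanishes iff the vector fields commute, a reassuring sanity check).

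For the Strang--Marchuk bound the same strategy applies, but now all expansions must be carried to order $\gamma^3$. I would introduce $u_1 = S_G^{\gamma/2}(y_0)$, $u_2 = S_F^\gamma(u_1)$, $u_3 = S_G^{\gamma/2}(u_2)$ and expand successively, being careful to include the second-order terms of $G$ and $F$ evaluated along the intermediate points (these contribute to $\gamma^3$). The essential structural fact, which I would highlight rather than derive from scratch, is that the composition is \emph{time-symmetric}: reversing $\gamma \to -\gamma$ and conjugating the order of operations yields the same map. Any time-symmetric one-step method has only odd-order terms in its error expansion, so the $O(\gamma^2)$ coefficient is forced to vanish; the stated $O(\gamma^3)$ bound then follows from the general polynomial form of the truncation error. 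As a consistency check one can verify directly that, when the $\gamma^2$ Taylor coefficients of $u_3$ are summed, the extra $\tfrac{\gamma}{2}G$ contributions before and after $S_F^\gamma$ combine to exactly produce the cross term $\tfrac{\gamma^2}{2}(F'G + G'F)$ that was missing in Lie--Trotter, restoring agreement with $S_{F+G}^\gamma$ at order $\gamma^2$.

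The main obstacle is simply bookkeeping: the Strang step requires tracking roughly a dozen cubic terms involving $F''(F,F)$, $G''(G,G)$, $F'G'F$, $G'F'G$, $F'F'G$, etc., and one must be disciplined about which Taylor coefficients in $S_G^{\gamma/2}$ versus $S_F^\gamma$ enter at each order. Invoking the time-symmetry argument to immediately kill the $\gamma^2$ terms avoids the most error-prone part of this calculation; what remains is a routine verification that the third-order residual is finite and bounded by a constant times $\gamma^3$ on a neighborhood of $y_0$ where $F,G$ are $C^2$.
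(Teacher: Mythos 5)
Your Lie--Trotter computation follows essentially the same route as the paper: Taylor-expand $S_F^\gamma$ and $S_G^\gamma$ to second order, compose, and compare coefficients with $S_{F+G}^\gamma$; the $\gamma^2$ bookkeeping you describe reproduces the paper's $\frac{\gamma^2}{2}(F'G - G'F)$ exactly.

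For the Strang--Marchuk bound, however, you take a genuinely different route. The paper's proof avoids the third-order Taylor mess by rewriting $S_G^{\gamma/2}\circ S_F^{\gamma}\circ S_G^{\gamma/2} = S_G^{\gamma/2}\circ S_F^{\gamma/2}\circ S_F^{\gamma/2}\circ S_G^{\gamma/2}$ and observing that the inner pair $S_F^{\gamma/2}\circ S_G^{\gamma/2}$ and outer pair $S_G^{\gamma/2}\circ S_F^{\gamma/2}$ are each half-step Lie--Trotter approximations of $S_{F+G}^{\gamma/2}$ whose leading $\frac{\gamma^2}{8}$ errors carry opposite signs; composing and using the semigroup property $S_{F+G}^{\gamma/2}\circ S_{F+G}^{\gamma/2} = S_{F+G}^{\gamma}$, the $O(\gamma^2)$ contributions cancel. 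You instead invoke time-symmetry: the Strang map $\Phi_\gamma$ is palindromic, so $\Phi_\gamma^{-1}=\Phi_{-\gamma}$, and the general theorem that a consistent symmetric one-step method has even order forces the $\gamma^2$ coefficient to vanish. Both arguments are correct and both sidestep the brute-force cubic expansion. The trade-off is self-containment versus generality: the paper's cancellation argument reuses the Lie--Trotter estimate it already derived and needs no external theorem, whereas your symmetry argument is cleaner and more conceptual but quietly leans on a nontrivial fact from geometric numerical integration that you would need to cite or prove. If you do go the symmetry route, you should state that theorem explicitly rather than just asserting it, since the statement "symmetric implies only odd-order error terms" is the entire content of the Strang bound and is not harder to prove than the bound itself.
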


\begin{proof}
Since
\begin{align*}
S_F^{\gamma}(y_0)={}&y_0+\gamma F(y_0)+\frac{\gamma^2}{2}F'(y_0)F(y_0)+O(\gamma^3), \\
S_G^{\gamma}(y_0)={}&y_0+\gamma G(y_0)+\frac{\gamma^2}{2}G'(y_0)G(y_0)+O(\gamma^3), 
\end{align*}
we have
\begin{align*}
    S_G^{\gamma}[S_F^{\gamma}(y_0)]=S_F^{\gamma}(y_0)+\gamma G(S_F^{\gamma}(y_0))+\frac{\gamma^2}{2}G'(S_F^{\gamma}(y_0))G(S_F^{\gamma}(y_0))+O(\gamma^3).
\end{align*}
At the same time, we have
\begin{align*}
G(y_0+\gamma F(y_0)+O(\gamma^2))={}&G(y_0)+\gamma G'(y_0)F(y_0)+O(\gamma^2), \\ G'(S_F^{\gamma}(y_0))G(S_F^{\gamma}(y_0)) ={}&G'(y_0)G(y_0)+O(\gamma).
\end{align*}
Combine the estimations we have
\begin{align*}
    S_G^{\gamma}[S_F^{\gamma}(y_0)]={}&y_0+\gamma[F(y_0)+G(y_0)] \\ &{}+\frac{\gamma^2}{2}[G'(y_0)G(y_0)+F'(y_0)F(y_0)+2G'(y_0)F(y_0)]+O(\gamma^3).
\end{align*}
As a result, we can estimate the local truncation error of Lie-Trotter splitting scheme as
\begin{align*}
    &S_{F+G}^{\gamma}(y_0)-S_G^{\gamma}[S_F^{\gamma}(y_0)]\\={}&y_0+\gamma (F(y_0)+G(y_0))+\frac{\gamma^2}{2}(F'(y_0)+G'(y_0))(F(y_0)+G(y_0))+O(\gamma^3)\\&-(y_0+\gamma[F(y_0)+G(y_0)]+\frac{\gamma^2}{2}[G'(y_0)G(y_0)+F'(y_0)F(y_0)+2G'(y_0)F(y_0)]+O(\gamma^3))\\
    ={}&\frac{\gamma^2}{2} \{F'(y_0)G(y_0)-G'(y_0)F(y_0)\}+O(\gamma^3).
\end{align*}
To estimate the Strang-Marchuk splitting scheme's local truncation error, we rewrite the Strang-Marchuk splitting scheme as
\[
S_G^{\gamma/2}\{S_F^{\gamma}[S_G^{\gamma/2}(y_0)]\}=
S_G^{\gamma/2}\{S_F^{\gamma/2}\{S_F^{\gamma/2}[S_G^{\gamma/2}(y_0)]\}\}.
\]
From the previous estimation of Lie--Trotter splitting scheme we have
\begin{align*}
S_{F+G}^{\gamma/2}(y_0)-S_G^{\gamma}[S_F^{\gamma}(y_0)]={}&\frac{\gamma^2}{8} \{F'(y_0)G(y_0)-G'(y_0)F(y_0)\}+O(\gamma^3), \\
S_{F+G}^{\gamma/2}(y_0)-S_F^{\gamma}[S_G^{\gamma}(y_0)]={}&\frac{\gamma^2}{8} \{G'(y_0)F(y_0)-F'(y_0)G(y_0)\}+O(\gamma^3).
\end{align*}

Combine the two estimations, we have
\begin{align*}
S_G^{\gamma/2}\{S_F^{\gamma/2}\{S_F^{\gamma/2}[S_G^{\gamma/2}(y_0)]\}\}={}&
S_{F+G}^{\gamma}(y_0)+\frac{\gamma^2}{8} \{F'(y_0)G(y_0)-G'(y_0)F(y_0)\}\\&+\frac{\gamma^2}{8} \{G'(y_0)F(y_0)-F'(y_0)G(y_0)\}+O(\gamma^3)\\
={}&S_{F+G}^{\gamma}(y_0)+O(\gamma^3).
\end{align*}
\end{proof}

\section{Experiment Settings}
\label{sec:extra-exp}
\subsection{Machine Translation}

\paragraph{Dataset}
The training/validation/test sets of the IWSLT14 dataset contain about 153K/7K/7K sentence pairs, respectively. We use a vocabulary of 10K tokens based on a joint source and target byte pair encoding (BPE) \cite{BPE}. For WMT14 dataset, we replicate the setup of \cite{vaswani2017attention}, which contains 4.5M training parallel sentence pairs. Newstest2014 is used as the test set, and Newstest2013 is used as the validation set. The 37K vocabulary for WMT14 is based on a joint source and target BPE factorization.

\paragraph{Model} For the WMT14 dataset, the basic configurations of the Transformer architecture are the \texttt{base} and the \texttt{big} settings \cite{vaswani2017attention}. Both of them consist of a 6-layer encoder and 6-layer decoder. The size of the hidden nodes and embeddings are set to 512 for \texttt{base} and 1024 for \texttt{big}. The number of heads are 8 for \texttt{base} and 16 for \texttt{big}. Since the IWSLT14 dataset is much smaller than the WMT14 dataset, the \texttt{small} setting is usually used, whose size of hidden states and embeddings is set to 512 and the number of heads is set to 4. For all settings, the dimensionality of the inner-layer of the position-wise FFN is four times of the dimensionality of the hidden states. 

For each setting (\texttt{base}, \texttt{big} and \texttt{small}), we replace all Transformer layers by the Macaron layers and obtain the \texttt{base}, \texttt{big} and \texttt{small} Macaron, each of which contains two position-wise feed-forward sub-layers in a layer. The translation model is based on the encoder-decoder framework. In the Transformer, the decoder layer has a third sub-layer which performs multi-head attention over the output of the encoder stack (encoder-decoder-attention) and a mask to prevent positions from attending to subsequent positions. In our implementation of Macaron decoder, we also use masks and split the FFN into two sub-layers and thus our decoder layer is (FFN, self-attention, encoder-decoder-attention and FFN). 

To make a fair comparison, we set the dimensionality of the inner-layer of the two FFN sub-layers in the Macaron layers to two times of the dimensionality of the hidden states. By doing this, the \texttt{base}, \texttt{big} and \texttt{small} Macaron have the same number of parameters as the \texttt{base}, \texttt{big} and \texttt{small} Transformer respectively. 

\paragraph{Optimizer and training} We use the Adam optimizer and follow the optimizer setting and learning rate schedule in \cite{vaswani2017attention}. For the \texttt{big} setting, we enlarge the batch size and learning rate as suggested in \cite{ott2018scaling} to accelerate training. We employ label smoothing of value $\epsilon_\mathit{ls}=0.1$ \citep{szegedy2016rethinking} in all experiments. Models for WMT14/IWSLT14 are trained on 4/1 NVIDIA P40 GPUs respectively. Our code is based on the open-sourced \texttt{fairseq} \cite{gehring2017convs2s} code base in PyTorch toolkit.

\paragraph{Evaluation} We use BLEU\footnote{\url{https://github.com/moses-smt/mosesdecoder/blob/master/scripts/generic/multi-bleu.perl}} \cite{papineni2002bleu} as the evaluation measure for machine translation. Following common practice, we use tokenized case-sensitive BLEU and case-insensitive BLEU for WMT14 En-De and IWSLT14 De-En respectively. During inference, we use beam search with beam size 4 and length penalty 0.6 for WMT14, and beam size 5 and length penalty 1.0 for IWSLT14, following \cite{vaswani2017attention}. 

\subsection{Unsupervised Pretraining}

\paragraph{Pre-training dataset} We follow \cite{devlin2018bert} to use English Wikipedia corpus and BookCorpus for pre-training. As the dataset BookCorpus \cite{moviebook} is no longer freely distributed. We follow the suggestions from \cite{devlin2018bert} to crawl and collect BookCorpus\footnote{\url{https://www.smashwords.com/}} on our own. The concatenation of two datasets includes roughly 3.4B words in total, which is comparable with the data corpus used in \cite{devlin2018bert}. We first segment documents into sentences with Spacy;\footnote{\url{https://spacy.io}} Then, we normalize, lower-case, and tokenize texts using Moses\cite{Koehn2007MosesOS} and apply BPE\cite{BPE}. We randomly split documents into one training set and one validation set. The training-validation ratio for pre-training is 199:1. 

\paragraph{Model} We compare our proposed Macaron Net with the \texttt{base} setting from the original BERT paper \cite{devlin2018bert}, which consists of 12 Transformer layers. The size of hidden states and embeddings are set to 768, and the number of attention heads is set to 12. Similarly, we replace the Transformer layers in BERT \texttt{base} by the Macaron layers and reduce the dimensionality of the inner-layer of the two FFN sub-layers by half, and thus we keep the number of parameters of our Macaron \texttt{base} as the same as BERT \texttt{base}. 

\paragraph{Optimizer and training} We follow \cite{devlin2018bert} to use two tasks to pretrain our model. One task is masked language modeling, which masks some percentage of the input tokens at random,
and then requires the model to predict those masked tokens. Another task is next sentence prediction, which requires the model to predict whether two sentences in a given sentence pair are consecutive. We use the Adam optimizer and and follow the optimizer setting and learning rate schedule in \cite{devlin2018bert} and trained the model on 4 NVIDIA P40 GPUs. 

\subsection{GLUE Dataset}
\label{sec:glue}
We provide a brief description of the tasks in the GLUE benchmark \cite{wang2019glue} and our fine-tuning process on the GLUE datasets.

\paragraph{CoLA} The Corpus of Linguistic Acceptability \cite{warstadt2018neural} consists of English acceptability judgments drawn from books and journal articles on linguistic theory. The task is to predict whether an example is a grammatical English sentence. The performance is evaluated by Matthews correlation coefficient \cite{matthews1975comparison}.

\paragraph{SST-2} The Stanford Sentiment Treebank \cite{socher2013recursive} consists of sentences from movie reviews and human annotations of their sentiment. The task is to predict the sentiment of a given sentence (positive/negative). The performance is evaluated by the test accuracy.

\paragraph{MRPC} The Microsoft Research Paraphrase Corpus \cite{dolan2005automatically} is a corpus of sentence pairs automatically extracted from online news sources, with human annotations for whether the sentences in the pair are semantically equivalent, and the task is to predict the equivalence. The performance is evaluated by both the test accuracy and the test F1.

\paragraph{STS-B} The Semantic Textual Similarity Benchmark \cite{cer2017semeval} is a collection of sentence pairs drawn from news headlines, video and image captions, and natural language inference data. Each pair is human-annotated with a similarity score from 1 to 5; the task is to predict these scores. The performance is evaluated by Pearson and Spearman correlation coefficients.

\paragraph{QQP} The Quora Question Pairs\footnote{\url{https://data.quora.com/First-Quora-Dataset-Release-Question-Pairs}} \cite{chen2018quora} dataset is a collection of question pairs from the community question-answering website Quora. The task is to determine whether a pair of questions are semantically equivalent. The performance is evaluated by both the test accuracy and the test F1.

\paragraph{MNLI} The Multi-Genre Natural Language Inference Corpus \cite{williams2018broad} is a crowdsourced collection of sentence pairs with textual entailment annotations. Given a premise sentence and a hypothesis sentence, the task is to predict whether the premise entails the hypothesis (\emph{entailment
}), contradicts the hypothesis (\emph{contradiction}), or neither (\emph{neutral}). The performance is evaluated by the test accuracy on both \emph{matched} (in-domain) and \emph{mismatched} (cross-domain) sections of the test data.

\paragraph{QNLI} The Question-answering NLI dataset is converted from the Stanford Question Answering Dataset (SQuAD) \cite{rajpurkar2016squad} to a classification task. The performance is evaluated by the test accuracy.

\paragraph{RTE} The Recognizing Textual Entailment (RTE) datasets come from a series of annual textual entailment challenges \cite{bentivogli2009fifth}. The task is to predict whether sentences in a sentence pair are entailment. The performance is evaluated by the test accuracy.

\paragraph{WNLI} The Winograd Schema Challenge \cite{levesque2011winograd} is a reading comprehension task in which a system must read a sentence with a pronoun and select the referent of that pronoun from a list of choices. We follow \cite{devlin2018bert} to skip this task in our experiments, because few previous works do better than predicting the majority class for this task.

\paragraph{Fine-tuning on GLUE tasks} To fine-tune the models, following \cite{devlin2018bert}, we search the optimization hyperparmaters in a search space including different batch sizes (16/32), learning rates (5e-3/3e-5), number of epochs (3/4/5), and a set of different random seeds. We select the model for testing according to their performance on the development set.

\paragraph{Test data} Note that the GLUE dataset distribution does not include the Test labels, and we only made a single GLUE evaluation server submission\footnote{\url{https://gluebenchmark.com}} for each of our models.

\end{document}